\newtheorem{theorem}{Theorem}
\newtheorem{proof}{Proof}
\newtheorem{assumption}{Assumption}
\begin{document}

\begin{frontmatter}

\title{Multi-agent Actor-Critic with Time Dynamical Opponent Model}


\author[eth]{Yuan Tian}
\author[eth]{Klaus-Rudolf Kladny}
\author[eth]{Qin Wang}
\author[smu]{Zhiwu Huang}
\author[epfl]{Olga Fink \fnref{cor}}
\fntext[cor]{This work was supported by
the Swiss National Science Foundation under Grant PP00P2$\_$176878. Corresponding author: Olga Fink. Email: olga.fink@epfl.ch}

\address[eth]{ETH Z\"urich, Switzerland}
\address[epfl]{EPFL, Switzerland}
\address[smu]{Singapore Management University, Singapore}


\begin{abstract}
In multi-agent reinforcement learning, multiple agents learn simultaneously while interacting with a common environment and each other. Since the agents adapt their policies during learning, not only the behavior of a single agent becomes non-stationary, but also the environment as perceived by the agent. This renders it particularly challenging to perform policy improvement. In this paper, we propose to exploit the fact that the agents seek to improve their expected cumulative reward and introduce a novel \textit{Time Dynamical Opponent Model} (TDOM) to encode the knowledge that the opponent policies tend to improve over time. We motivate TDOM theoretically by deriving a lower bound of the log objective of an individual agent and further propose \textit{Multi-Agent Actor-Critic with Time Dynamical Opponent Model} (TDOM-AC). We evaluate the proposed TDOM-AC on a differential game and the Multi-agent Particle Environment. We show empirically that TDOM achieves superior opponent behavior prediction during test time. The proposed TDOM-AC methodology outperforms state-of-the-art Actor-Critic methods on the performed experiments in cooperative and \textbf{especially} in mixed cooperative-competitive environments. TDOM-AC results in a more stable training and a faster convergence.
\end{abstract}

\begin{keyword}
Reinforcement Learning, Multi-Agent Reinforcement Learning, Multi-Agent Systems, Opponent Modeling, Non-stationarity
\end{keyword}

\end{frontmatter}

\section{Introduction}
Multi-agent systems have recently found applications in many different domains, including traffic control \cite{du2021learning}, games \cite{vinyals2019grandmaster,brown2019superhuman,OpenAI_dota}, consensus tracking control \cite{yin2022point,yuan2022suboptimal} and swarm control \cite{huttenrauch2019deep}. The complexity of the tasks in these applications often precludes the usage of predefined agent behaviors and stipulates the agents to learn a policy, and to define the problem as multi-agent reinforcement learning (MARL). In such cases, multiple agents learn simultaneously while interacting with a common environment. Since the agents adapt their policies during learning, not only the behavior of a single agent becomes non-stationary, but also the environment as perceived by the agents \cite{hernandez2017survey}. Since most of the conventional Reinforcement Learning (RL) approaches assume stationary system dynamics \cite{sutton1992reinforcement}, they usually perform poorly when required to interact with multiple adaptive agents in a shared environment \cite{lowe2017multi,hernandez2017survey}. 

A common approach in MARL is to explicitly consider the presence of opponents by modeling their policies using an opponent model \cite{brown1951iterative,tian2019regularized} (In the following, the word "opponents" refers to other agents in an environment irrespective of the environment's cooperative or adversarial nature). An accurate opponent model can provide informative cues to future behaviors of the opponents. However, such a precise prediction is challenging as the opponents' policies are changing over time \cite{tian2019regularized}.

 In our novel approach, entitled \textit{Time Dynamical Opponent Model} (TDOM), we aim to address the challenge of non-stationarity of the agent's behavior by modeling the opponent policy parameters as a dynamical system which are generally used to model the evolution of systems in time \cite{strogatz2018nonlinear}. Here, we build the system dynamics on the prior knowledge that all agents are concurrently trying to improve their policies with respect to their individual cumulative reward. It is worth mentioning that TDOM is highly general and can further support all kinds of opponent objectives, i.e. cooperative, competitive or mixed settings. 


By deriving a lower bound on the log-objective of an individual agent, we further propose a Multi-agent Actor-Critic with Time Dynamical Opponent Model (TDOM-AC) for mixed cooperative-competitive tasks. The proposed TDOM-AC framework comprises a \textit{Centralized Training and Decentralized Execution} (CTDE). In this framework, centralized critics provide additional information to guide the training \cite{foerster2016learning,lowe2017multi}. However, this information is not used at execution time. Each agent only has access to the state information and can only select an action based on its own prediction of other opponents' actions.

We evaluate the proposed TDOM-AC on a Differential Game and a Multi-agent Particle Environment and compare the performance to two state-of-the-art actor-critic algorithms, namely \textit{Regularized Opponent Model with Maximum Entropy Objective} (ROMMEO) \cite{tian2019regularized} and \textit{Probabilistic Recursive Reasoning} (PR2) \cite{wen2019probabilistic}. We demonstrate empirically that the proposed TDOM algorithm achieves superior opponent behavior prediction during execution time. The proposed TDOM-AC outperforms the considered baselines on the performed experiments and considered measures. TDOM-AC results in a more stable training, faster convergence and \textbf{especially a superior performance in mixed cooperative-competitive environments}.

The remainder of this paper is organized as follows: Section \ref{sec:related work} provides a brief overview of the related works of this study. Section \ref{sec:method} and \ref{ac} introduces the proposed opponent model and TDOM-AC. Section \ref{sec:exp} interprets and compares the results of the experiments. In Section \ref{sec:conclusion}, the conclusion and future work are presented.

\section{Related Work}
\label{sec:related work}
Multi-Agent systems (MAS) encompass decision-making of multiple agents interacting in a shared environment \cite{kamdar2018state}. For complex tasks where using predefined agent behaviors is not possible, MARL enables the agent to learn from the interaction with the environment \cite{zhang2021multi}. One of the main challenges in MARL is the inherent non-stationarity. To address this challenge, one direction has been to account for the behaviors of other agents through a centralized critic by adopting the CTDE framework  \cite{foerster2018counterfactual,yang2018mean}. For value-based approaches in the CTDE framework, methods usually rely on restrictive structural constraints or network architectures, such as QDPP \cite{yang2020multi}, QMIX \cite{rashid2018qmix}, FOP \cite{zhang2021fop}, QTRAN \cite{son2019qtran}, and VDN \cite{sunehag2017value}. For actor-critic based methods, these approaches usually include an additional policy with supplementary opponent models that can reason about other agents' believes \cite{wen2019probabilistic}, private information \cite{tian2020learning}, behavior \cite{lowe2017multi}, strategy \cite{zheng2018deep} and other characteristics. With the supplementary opponent models, these works can also be linked to the field of opponent modeling (OM) \cite{albrecht2018autonomous,brown1951iterative}.

There are several ways to model the behavior of opponents. One of them is to factorize the joint policy $\pi(a^{-i}, \mathbf{a}^{-i}|s)$ in different ways. This has been done in previous works \cite{brown1951iterative,tian2019regularized,wen2019probabilistic}. Also, different objective functions for the opponent model have been implemented. \textit{Multi-agent Deep Deterministic Policy Gradient} (MADDPG) \cite{lowe2017multi} approximates opponents' policy by maximizing the log probability of other agents' actions with an entropy regularizer; PR2 \cite{wen2019probabilistic} considers an optimization-based approximation to infer the unobservable opponent policy via variational inference \cite{jordan1999introduction} and ROMMEO adopts the regularized opponent model with maximum entropy objective, which can be interpreted as a combination of MADDPG and PR2. However, the existing approaches either suffer from high computational cost due to the recursive reasoning policy gradient \cite{wen2019probabilistic}, or are limited to specific types of environments \cite{tian2019regularized}. In this work, we propose an alternative opponent model motivated by a temporal improvement assumption to overcome these limitations.

An earlier approach that explicitly addresses opponent-learning awareness is \textit{Learning with Opponent Learning Awareness} (LOLA) \cite{foerster2017learning}. When performing the policy update, any agent optimises its return under a one-step-look-ahead of the opponent learning. However, it is limited by strong assumptions. Specifically, these subsume access to both exact gradients and Hessians of the value function. Furthermore, a specific network design is required. Although the authors have subsequently proposed a variant of their approach, the \textit{policy gradient-based naive learner} (NL-PG) with fewer assumptions, the intrinsic on-policy design inherently suffers from data inefficiency. Also, LOLA only supports two-agent systems, while we are considering approaches that allow for arbitrarily many agents.

\section{Method}
\label{sec:method}
\subsection{Assumptions}
In this work, we aim to tackle the mentioned limitations outlined in Section \ref{sec:related work}. For fair comparison, we adopt the same observability assumptions from previous work \cite{wen2019probabilistic,tian2019regularized,lowe2017multi}. Since in cooperative games all the agents receive the same reward and in zero-sum games the opponents' rewards can easily be inferred from ones own reward, we assume all agents can access each other's rewards, just like in LOLA \cite{foerster2017learning}. In contrast to vanilla LOLA, we do not make the assumption of the observability of opponent policies.

\subsection{Markov Game}
An N-agents Markov game \cite{littman1994markov}, also referred to as N-agents stochastic game \cite{shapley1953stochastic}, is defined by a tuple $(s_t,a^1_t,...,a^n_t,r^1_t,...,r^n_t,p,\mathcal{T},\gamma)$. Within the tuple, $s_t$ is the state at time step $t$, $a^i_t$ and $r^i_t=r^i_t(s_t,a^i_t,\textbf{a}^{-i}_t)$ denote the set of actions selected by the policy of agent $i$ and the corresponding rewards assigned to agent $i$, where the $\textbf{a}^{-i}_t$ refers to the set of opponent actions. $\mathcal{T}$ is the state transition function, $p$ is the initial state distribution and $\gamma$ is the discount factor. At each time step $t$, actions are taken simultaneously by all agents. Each agent aims to maximize its own expected discounted sum of rewards.  Thus, for each individual agent $i$, the objective for its policy $\pi_i$ can be expressed as:
\begin{equation}
\begin{aligned}
    \centering
    J(\pi_i)=\text{max}_{\pi_i}\sum_{t=0}^\infty \mathbb{E}[\gamma^{t}r^i(s_t,a^i_t,\textbf{a}_t^{-i})]
\end{aligned}
\label{eq1}
\end{equation}

We note that since multiple adaptive agents interact in a shared environment, each agent's rewards and the environment transitions depend also on the actions of the opponents \cite{hernandez2017survey}. Thus, the unobservable  dynamic policies of the opponents induce non-stationarity in the environment dynamics from the perspective of a single agent. To address this challenge, we propose to consider the agent policy parameters as a dynamical system in which we encode the prior knowledge that all agents are concurrently trying to improve their policies.

\subsection{Time Dynamical Opponent Model} 
\label{TDOM}


To introduce our methodology, we begin by deriving a lower bound for the maximization objective in Equation \ref{eq1}, in which we omit some of the parameterization notation for less cluttering:

\begin{equation}
\begin{aligned}
\text{max}_{\pi^i, \; \rho^i}\mathbb{E}_{a^i_t \sim \pi^i(\cdot|\hat{\mathbf{a}}^{-i}_t), \;
\hat{\mathbf{a}}^{-i}_t \sim \rho^i, \; \mathbf{a}_t^{-i}\sim \tilde{\pi}^{-i}} \left[ Q^i(s_t, a_t^i, \mathbf{a}_t^{-i}) \right],
\end{aligned}
\end{equation}

\noindent where for lighter notation we omit the $s_t \sim d^{\pi}$, which means sampling a state from the discounted state visitation distribution $d^{\pi}$ using current policies $\pi := \{\pi^j\}_j$, where $\pi^j(\cdot|\mathbf{a}^{-j}, s)$. $\rho^i(\cdot|s)$ refers to the belief of agent $i$ about opponents $-i$, also known as \textit{opponent model}. Furthermore, we define $\tilde{\pi}^{j}(\cdot|s)$ to be

\begin{equation}
\tilde{\pi}^{j}(\mathbf{a}^{j}|s) := \int_{\mathcal{A}^{-j}} \; \pi^{j}(a^{j}|\mathbf{a}^{-j}, s) \; \rho^{j}(\mathbf{a}^{-j}| s) \; d\mathbf{a}^{-j},
\end{equation}

\noindent which can be interpreted as the marginal policy of agent $j$. Then we can formulate the marginal opponent policies to be $\tilde{\pi}^{-i} := \{\tilde{\pi}^j\}_{j \in -i}$

The presented maximization objective means that agent $i$ aims to maximize its Q-function given that all agents play their current policies $\tilde{\pi}_t^{-i}$ which are unknown to agent $i$.

We can now derive a lower bound of the log objective of agent $i$:

\begingroup
\addtolength{\jot}{1em}
\begin{equation}
\begin{aligned}
& \quad \text{log} \; \mathbb{E}_{a^i_t \sim \pi_t^i(\cdot|\hat{a}^{-i}_t), \; \hat{\mathbf{a}}^{-i}_t \sim \rho_t^i, \; \mathbf{a}_t^{-i}\sim \tilde{\pi}_t^{-i}} \left[ Q^i(s_t, a_t^i, \; \mathbf{a}_t^{-i}) \right] \\
=& \quad \text{log} \; \int_{\mathcal{A}^{i}} \int_{\mathcal{A}^{-i}} \int_{\mathcal{A}^{-i}} Q^i(s_t, a_t^i, \mathbf{a}_t^{-i}) \; \tilde{\pi}_t^{-i}(\mathbf{a}_t^{-i}| s_t) \; \rho_t^i(\hat{\mathbf{a}}^{-i}_t|s_t) \; \pi_t^i(a_t^i|\hat{\mathbf{a}}_t^{-i}, s_t) \; \\& \quad d\hat{\mathbf{a}}_t^{-i} \; d\mathbf{a}_t^{-i} \; d a^i_t \\
=& \quad \text{log} \; \int_{\mathcal{A}^{i}} \int_{\mathcal{A}^{-i}} \int_{\mathcal{A}^{-i}} Q^i(s_t, a_t^i, \mathbf{a}_t^{-i}) \; \frac{\tilde{\pi}_t^{-i}(\mathbf{a}_t^{-i}| s_t)}{\rho_t^i(\mathbf{a}^{-i}_t|s_t)} \; \rho_t^i(\mathbf{a}^{-i}_t|s_t) \; \rho_t^i(\hat{\mathbf{a}}^{-i}_t|s_t) \;\\&\quad \pi_t^i(a_t^i|\hat{\mathbf{a}}_t^{-i}, s_t) \; d\hat{\mathbf{a}}_t^{-i} \; d\mathbf{a}_t^{-i} \; da^i_t \\
\geq& \quad \; \mathbb{E}_{a^i_t \sim \pi_t^i(\cdot|\hat{\mathbf{a}}^{-i}_t), \; \hat{\mathbf{a}}^{-i}_t \sim \rho_t^i, \; \mathbf{a}_t^{-i}\sim \rho_t^{i}}\Bigg[ \text{log} \; Q^i(s_t, a_t^i, \mathbf{a}_t^{-i}) \; + \; \text{log} \; \left( \frac{\tilde{\pi}^{-i}(\mathbf{a}_t^{-i}| s_t)}{\rho_t^i(\mathbf{a}^{-i}_t|s_t)} \right) \Bigg] \\
=& \quad \; \mathbb{E}_{a^i_t \sim \tilde{\pi}_t^i, \; \mathbf{a}_t^{-i}\sim \rho_t^{i}}\left[ \text{log} \; Q^i(s_t, a_t^i, \mathbf{a}_t^{-i})\right] \; - \; \text{KL}\left(\rho_t^{i}(\cdot|s_t) \; || \; \tilde{\pi}_t^{-i}(\cdot|s_t)\right).
\end{aligned}
\end{equation}
\endgroup

If we furthermore make the assumption that:
\begin{equation}
Q^{\text{opt}} = \text{max}_{a^i} \; Q^i(s_t, a^i, \mathbf{a}^{-i}) \quad \forall \mathbf{a}^{-i} \in \mathcal{A}^{-i},
\end{equation}

\noindent for some fixed $Q^{\text{opt}}$, we see that we can maximize this lower bound by minimizing the Kullback-Leibler Divergence $\text{KL}\left(\rho_t^{i}(\cdot|s) \; || \; \tilde{\pi}_t^{-i}(\cdot|s_t)\right)$ w.r.t. $\rho_t^i$ and then maximizing the Q-function w.r.t. $\pi_t^i$:

\begin{equation}
\text{max}_{\pi_t^i} \; \mathbb{E}_{a^i_t \sim \pi_t^i(\cdot|\mathbf{a}^{-i}_t), \; \mathbf{a}_t^{-i}\sim \rho^i_{t}}\left[ \; Q^i(s_t, a_t^i, \mathbf{a}_t^{-i})\right]. 
\end{equation}

However, the method proposed above has an obvious issue: How can we minimize $\text{KL}\left(\rho_t^{i}(\cdot|s) \; || \; \tilde{\pi}_t^{-i}(\cdot|s_t)\right)$ if $\tilde{\pi}_t^{-i}$ is not available to agent $i$? 

In order to address this question, we propose to utilize prior information about the opponents' learning process. Using this information would enable to better model their non-stationary behavior. Specifically, there exists one aspect that to the best of our knowledge has not been considered before in opponent modelling: Over time, each agent $j$ is expected to improve its policy using policy network parameters $\theta^j_t$ \footnote{When parameterizing a function for agent $j$, we will always write e.g. $\pi^j_{\theta}$ instead of $\pi^j_{\theta^j}$.} in order to maximize its expected cumulative reward under the given system dynamics and opponent policies. This can be expressed as an ordinary differential equation (ODE):

\begingroup
\addtolength{\jot}{1em}
\begin{equation}
\begin{aligned}
\frac{d}{d t} \; \theta^j_t \; &\approx \; \nabla_{\theta^j} \; \mathbb{E}_{\pi^{j}_{\theta_t}, \; \pi_t^{-j}} \left[\sum_{t = 0}^{\infty} \gamma^t \; r^j_t(a^j_t, \; \mathbf{a}^{-j}_t, \; s_t) \right] \\
&= \; \nabla_{\theta^j} \; \mathbb{E}_{\pi_{\theta}^j, \; \pi_t^{-j}} \left[Q^{j}(a^j_t, \; \mathbf{a}^{-j}_t, \; s_t) \right],
\end{aligned}
\end{equation}
\endgroup
\noindent where $\pi^{-j} := \{ \pi^k \}_{k \in -j}$.

We propose to encode this knowledge in the opponent model design. We would like to make explicit here that unlike in policy improvement, the opponent model is designed to simulate the policy optimization process for all opponents instead of maximizing their expected Q-value. It is worth to point out that unlike LOLA \cite{foerster2017learning} which considers the opponent's policy update to optimize the agent's policy, our agent takes the opponents' policy improvement assumption into account to optimize its opponent model instead of the policy directly. 

In order to minimize $\text{KL}\left(\rho^{i}(\cdot|s) \; || \; \tilde{\pi}^{-i}(\cdot|s_t)\right)$, we exploit the temporal improvement assumption for discrete time dynamics, parameterized by $\theta^{-i}$:

\begin{equation}
\theta_t^{-i} \; \approx \; \theta_{t-1}^{-i} \; + \; \eta \nabla_{\theta^{-i}} \; \mathbb{E}_{\tilde{\pi}_{t-1}^{i}, \; \tilde{\pi}^{-i}_{\theta}} \left[Q^{-i}(a^i, \; \mathbf{a}^{-i}, \; s) \right],
\end{equation}

\noindent for some $\eta > 0$. However, the opponent model cannot be updated like this since neither $\tilde{\pi}^{-i}_{\theta_t}$ nor $\theta^{-i}_t$ are directly available to agent $i$. Hence, we take our best approximation $\rho_{\psi_t}^i$ which is our opponent model and $\psi_t^i$ and update as

\begin{equation}
\psi_t^i \; \leftarrow \; \psi_{t-1}^i \; + \; \eta \nabla_{\psi^{i}} \; \mathbb{E}_{a^i \sim \tilde{\pi}_{t-1}^{i}, \; \mathbf{a}^{-i} \sim \rho_{\psi_t^{i}}} \left[Q^{-i}(a^i, \; \mathbf{a}^{-i}, \; s) \right].
\label{Opobj}
\end{equation}

We point out that the $Q$ mentioned above can represent any type of critic function, such as Q-function, soft Q-function or advantage function.
\\

To summarize, firstly, we derive a learning objective for agent $i$'s policy $\pi^i$; secondly, we propose to exploit the temporal improvement assumption to guide the opponent model evolution. 

\section{Multi-Agent Actor-Critic with Time Dynamical Opponent model (TDOM-AC)}
\label{ac}
With the proposed TDOM, we introduce Multi-Agent Actor-Critic with Time Dynamical Opponent Model (TDOM-AC). TDOM-AC follows the CTDE framework \cite{foerster2016learning,lowe2017multi}. There are three main modules in the proposed TDOM-AC: Centralized Q-function $Q(s,a^i,\textbf{a}^{-i})$, opponent model $\rho(\cdot|s)$ and policy $\pi(\cdot|s,\hat{\textbf{a}}^{-i})$. We further use neural networks (NNs) as function approximators, particularly applicable in high-dimensional and/or continuous multi-agent tasks. For an individual agent, $i$, the three modules are parameterized by $\phi^i$, $\theta^i$ and $\psi^i$, respectively. The functions are updated using stochastic gradient based optimization with learning rates $\eta_\cdot$:

\begin{equation} \label{equ:Q&policy}
\begin{aligned}
    \centering
    \phi^i_{t+1} \; &\leftarrow \; \ \phi^i_{t} + \eta_\phi \hat{\nabla}_{\phi^i} J(\phi_t^i) \\
    \theta^i_{t+1} \; &\leftarrow \; \ \theta^i_{t} + \eta_\theta \hat{\nabla}_{\theta^i} J(\theta_t^i)
\end{aligned}
\end{equation}

and as elucidated in subsection \ref{TDOM},

\begin{equation} \label{equ:opp_model}
\begin{aligned}
    \centering
    \psi^i_{t+1} \; &\leftarrow \; \ \psi^i_{t} + \eta_\psi \hat{\nabla}_{\psi^i} J(\psi_t^i).
\end{aligned}
\end{equation}

We would like to clarify that although Equations \ref{equ:Q&policy} and \ref{equ:opp_model} perform similar operations, their underlying idea is different. We can interpret Equation \ref{equ:Q&policy} as an approximation of a policy improvement and evaluation step without running it until convergence. However, Equation \ref{equ:opp_model} does not follow this idea. Instead, this update is based on the temporal improvement assumption with the underlying goal of minimizing the Kullback-Leibler divergence to the true marginal opponent policies $\tilde{\pi}^{-i}$ instead of policy improvement.

In the proposed TDOM-AC, experience replay buffer $D$ is used \cite{mnih2015human}, where the off-policy experiences of all agents are recorded. In a scenario with $N$ agents, at time step t, a tuple $[s_t,s_{t+1},a^{1}_t,..., a^{N}_t, r^{1}_t,..., r^{N}_t ]$ is recorded.  

We adopt the maximum entropy reinforcement learning (MERL) framework \cite{haarnoja2018soft} to enable a richer exploration and a better learning stability. It is easy to see that the derivation still holds. We merely omit the adjustments in the previous sections for the purpose of readability. The centralized soft Q-function parameters can be trained to minimize the soft Bellman residual:
\begin{equation}
\begin{aligned}
 J(\phi^i) \; =& \; \mathbb{E}_{(s_t,a_t,\mathbf{a}^{-i}_t,s_{t+1})\sim D}\frac{1}{2}[ Q^i_{\phi}(s_t,a_t,\mathbf{a}^{-i}_t)-(r^i_t+\gamma V(s_{t+1}))]^2,
\end{aligned}
\end{equation}

\noindent where the value function $V$ is implicitly parameterized by the soft Q-function \cite{haarnoja2018soft} parameters. The objective function becomes:
\begin{equation}
\begin{aligned}
    \centering
    J(\phi^i) \; =& \; -\mathbb{E}_{(s_t, a_t, s_{t+1}) \sim D, \; \hat{\mathbf{a}}^{-i}_{t+1} \sim \rho^{i}_\psi, \; \hat{a}^i_{t+1} \sim \pi^i_\theta} \Big[ \Big( Q^i_{\phi}(s_t, a^i_t, \mathbf{a}^{-i}_{t})-\big(r^i(s_t, a^i_t, \mathbf{a}^{-i}_{t}) \\
    & \; + \gamma \big( Q^i_{\overline{\phi}}(s_{t+1},\hat{a}^i_{t+1},\hat{\mathbf{a}}^{-i}_{t+1}) -\alpha\log\pi^i(\hat{a}^i_{t+1}|s_{t+1},\hat{\mathbf{a}}^{-i}_{t+1}) \\&\; -\alpha\log\rho_\psi^{i}(\hat{\mathbf{a}}^{-i}_{t+1}|s_{t+1})\big)\big) \Big)^2\Big].
\end{aligned}
\label{qfinal}
\end{equation}

The $Q^i_{\overline{\phi}}$ is the target soft Q-network that has the same structure as $Q^i$ and is parameterized by $\overline{\phi^i}$, but updated through exponentially moving average of the soft Q-function weights \cite{mnih2015human}. 

According to the MERL objective, the TDOM-based policy is learned by directly minimizing the expected KL-divergence between normalized centralized soft Q-function: 
\begin{equation}
\begin{aligned}
    \centering
    J(\theta^i) \; = \; & \mathbb{E}_{s \sim D, \; \hat{\mathbf{a}}^{-i}_{t+1} \sim \rho^{i}_\psi}[Q_\phi^i(s,a^i,\hat{\mathbf{a}}^{-i}) -\alpha\log\pi^i_{\theta}(a^i|s,\hat{\mathbf{a}}^{-i})],
\end{aligned}
\label{pu}
\end{equation}
\noindent where $\alpha$ is the temperature parameter that determines the relative importance of the entropy term versus the reward, thus controls the stochasticity of the optimal policy. In order to achieve a low variance estimator of $J(\theta^i)$, we apply the reparameterization trick \cite{kingma2014autoencoding} for modeling the policy:
\begin{equation}
\begin{aligned}
\centering
    a^i_t \; = \; f_{\theta}^i(\epsilon;s,\mathbf{a}^{-i}),
\end{aligned}
\end{equation}
where $\epsilon_t$ is a noise vector that is sampled from a fixed distribution. A common choice is a Gaussian distribution $\mathcal{N}$.
We can now rewrite the objective in Equation \ref{pu} as
\begin{equation}
\begin{aligned}
\centering
    J(\theta^i) \;=\;& \mathbb{E}_{s \sim D, \; \hat{\mathbf{a}}^{-i}_{t+1} \sim \rho^{i}_\psi, \; \epsilon \sim \mathcal{N}}[Q^i(s,f^i_{\theta}(\epsilon;s,\mathbf{a}^{-i}),\mathbf{a}^{-i})\\&-\alpha\log\pi^i_{\theta}(f^i_{\theta}(\epsilon;s,\mathbf{a}^{-i})|s,\mathbf{a}^{-i})].
\end{aligned}
\label{pifinal}
\end{equation}

Let $\mathbf{Q}^{-i}(s,a^i,\hat{\mathbf{a}}^{-i}) \; := \; \sum_{j \in -i} Q_\phi^j(s, a^i, \hat{\mathbf{a}}^{-i})$. Then, according to Equation \ref{Opobj}, the objective for the TDOM model can be written as
\begin{equation}
\begin{aligned}
    \centering
    J(\psi^i) \; =& \; \mathbb{E}_{s \sim D, \; \hat{\mathbf{a}}^{-i} \sim \rho^i_{\psi}, \; a^i\sim \pi^i_\theta} [\mathbf{Q}^{-i}(s,a^i,\hat{\mathbf{a}}^{-i})-\alpha\log \rho^i_{\psi}(\hat{\mathbf{a}}^{-i}|s)].
\end{aligned}
\end{equation}
However, in mixed cooperative-competitive environments, agents may have conflicting interests which can \textit{neutralize} the gradient in this formulation. We illustrate this by an example of a two-player zero-sum Markov game:
\begin{equation}
r^1(s, a^1, a^2)=-r^2(s, a^1, a^2), \qquad \forall s \in \mathcal{S}, \; (a^1, a^2) \in \mathcal{A}^2.
\end{equation}
\begin{assumption}
The Q-function approximations $Q_\phi^1$ and $Q_\phi^2$ for agent 1 and agent 2 respectively, have converged to their true functions $Q^1_{\pi_\theta^1, \pi_\theta^2}$ and $Q^2_{\pi_\theta^1, \pi_\theta^2}$.
\end{assumption}
\begin{theorem}
In this setting, the gradient $\nabla_{\psi^i}J(\psi^i)$ is exclusively determined by entropy terms.
\end{theorem}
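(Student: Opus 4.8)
The plan is to use the zero-sum structure to show that, up to an additive constant, the return-dependent part of $J(\psi^i)$ equals the negative expected (soft) value of agent $i$ itself, which carries no dependence on the opponent model $\rho^i$; consequently only log-probability (entropy) terms survive in $\nabla_{\psi^i}J(\psi^i)$. By the symmetry $r^1=-r^2$ it suffices to treat $i=1$, for which $-i=\{2\}$ and, writing $\hat a^2$ for $\hat{\mathbf{a}}^{-1}$, $\mathbf{Q}^{-1}(s,a^1,\hat a^2)=Q_\phi^2(s,a^1,\hat a^2)$. First I would relate the two critics: since both agents share the transition kernel $\mathcal{T}$ and discount $\gamma$ and $r^1=-r^2$, the sum $Q^1_{\pi_\theta^1,\pi_\theta^2}+Q^2_{\pi_\theta^1,\pi_\theta^2}$ is a fixed point of the $\gamma$-contraction $g\mapsto\gamma\,\mathbb{E}_{s'\sim\mathcal{T},\,\mathbf{a}'\sim\tilde\pi}[g(s',\mathbf{a}')]$, whose unique fixed point is $0$; hence $Q^1_{\pi_\theta^1,\pi_\theta^2}+Q^2_{\pi_\theta^1,\pi_\theta^2}\equiv 0$, and by Assumption~1 therefore $\mathbf{Q}^{-1}=Q_\phi^2=-Q_\phi^1$.

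Next I would rewrite $J(\psi^1)$. Substituting $\mathbf{Q}^{-1}=-Q_\phi^1$ and adding and subtracting $\alpha\log\pi^1_\theta(a^1|s,\hat a^2)$ inside the expectation yields
\[
J(\psi^1)=-\,\mathbb{E}_{s\sim D,\,\hat a^2\sim\rho^1_\psi}\!\big[\,V^1_\phi(s,\hat a^2)\,\big]\;-\;\alpha\,\mathbb{E}_{s\sim D,\,\hat a^2\sim\rho^1_\psi,\,a^1\sim\pi^1_\theta}\!\big[\log\pi^1_\theta(a^1|s,\hat a^2)+\log\rho^1_\psi(\hat a^2|s)\big],
\]
where $V^1_\phi(s,\hat a^2):=\mathbb{E}_{a^1\sim\pi^1_\theta(\cdot|s,\hat a^2)}[Q_\phi^1(s,a^1,\hat a^2)-\alpha\log\pi^1_\theta(a^1|s,\hat a^2)]$ is the soft state-value of agent $1$ conditioned on the predicted opponent action, and the second expectation is manifestly an entropy term. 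It then remains to show that $V^1_\phi(s,\hat a^2)$ does not depend on $\hat a^2$, so that the first expectation is a constant whose $\psi^1$-gradient vanishes. For this I would invoke the flatness hypothesis already used to derive the lower bound, namely $\max_{a^1}Q_\phi^1(s,a^1,\hat a^2)=Q^{\text{opt}}$ for all $\hat a^2$, together with the fact that the policy update in Eqs.~\ref{pu}--\ref{pifinal} makes $\pi^1_\theta(\cdot|s,\hat a^2)$ the soft-optimal conditional response: then $V^1_\phi(s,\hat a^2)$ equals the corresponding soft maximum, which the flatness hypothesis pins to a constant in $\hat a^2$. Hence $\nabla_{\psi^1}J(\psi^1)$ consists only of gradients of $\log\pi^1_\theta$ and $\log\rho^1_\psi$, and the case of general $i$ follows verbatim after relabeling.

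The delicate step is the last one: the return-dependent contribution to $\nabla_{\psi^1}J(\psi^1)$ genuinely does not vanish without both the flatness hypothesis and the (soft-)optimality of the conditional policy, so the proof must state explicitly that these are in force. One must also be careful that perturbing $\psi^1$ moves both $\hat a^2\sim\rho^1_\psi$ and, through the conditioning, $a^1\sim\pi^1_\theta(\cdot|s,\hat a^2)$, so the cancellation has to be checked for the direct and for the induced dependence; the cleanest bookkeeping uses the reparameterized estimator, in which the chain-rule contribution through $a^1$ carries the factor $\nabla_{a^1}Q_\phi^2=-\nabla_{a^1}Q_\phi^1$, which on the support of $\pi^1_\theta$ equals $-\alpha\nabla_{a^1}\log\pi^1_\theta$ by stationarity of Eq.~\ref{pifinal} and is therefore itself an entropy term. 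The first two steps are routine once the notation and conventions are fixed.
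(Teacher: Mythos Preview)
Your argument departs from the paper's in two places, and one of them is a genuine gap.

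First, your contraction step ``$Q^{1}+Q^{2}\equiv 0$'' is not correct in this setting. By Assumption~1 the critics are the \emph{soft} Q-functions of the MERL framework, so the Bellman backup carries entropy bonuses. Unrolling the soft Bellman recursion and using $r^{1}=-r^{2}$, the paper obtains
\[
Q^{1}_{\pi_\theta^1,\pi_\theta^2}(s_0,a^1_0,a^2_0)\;=\;\mathcal{E}\;-\;Q^{2}_{\pi_\theta^1,\pi_\theta^2}(s_0,a^1_0,a^2_0),
\]
where $\mathcal{E}$ is a discounted sum of policy/opponent-model entropies. In other words the unique fixed point of your operator is $\mathcal{E}$, not $0$. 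This by itself is not fatal---replacing $0$ by $\mathcal{E}$ only adds more entropy terms---but it does mean that the relation $Q^{2}_\phi=-Q^{1}_\phi$ you use downstream is off by $\mathcal{E}$.

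Second, and this is the real gap: to kill the remaining return-dependent piece you invoke the ``flatness hypothesis'' $Q^{\mathrm{opt}}=\max_{a^1}Q^{1}(s,a^1,\hat a^2)$ for all $\hat a^2$ together with soft-optimality of $\pi^{1}_\theta(\cdot\,|\,s,\hat a^2)$, in order to conclude that $V^{1}_\phi(s,\hat a^2)$ is constant in $\hat a^2$. Neither of these is a hypothesis of the theorem. The flatness condition (Eq.~5) was an \emph{assumption used to motivate} the lower-bound maximisation earlier in the paper, not a standing premise; and nothing in Assumption~1 says the actor has converged to the soft best response. Without them your $V^{1}_\phi(s,\hat a^2)$ genuinely varies with $\hat a^2$, and the gradient you are trying to remove does not vanish. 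Your own ``delicate step'' paragraph essentially concedes this: the cancellation only goes through if those extra conditions are ``in force'', but the theorem does not put them in force.

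The paper's route avoids this entirely. It works directly with the structural identity $Q^{1}=\mathcal{E}-Q^{2}$: when the objective is differentiated (via the reparameterised sample $f^{i}_\psi(\epsilon;s_0)$), the reward-carrying contributions $\nabla_{\psi^i}Q^{2}$ and $-\nabla_{\psi^i}Q^{2}$ cancel, leaving $\nabla_{\psi^i}\mathcal{E}$ and the explicit $-\alpha\nabla_{\psi^i}\log\rho^{i}_\psi$, both of which are entropy terms. No appeal to flatness or actor optimality is needed.
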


\begin{proof}

With $p(\tau)$ denoting the trajectory distribution, observe that the structure of the true $Q^1_{\pi_\theta^1, \pi_\theta^2}$ is:
\begin{equation}
\begin{aligned}
\centering
\quad & Q^1_{\pi_\theta^1, \pi_\theta^2}(s_0, a^1_0, a^2_0) \\\triangleq \quad & r^1(s_0, a^1_0, a^2_0) + \mathbb{E}_{s \sim p(s_1 | a^1_0, a^2_0)}\left(\gamma V^1_{\pi_\theta^1,\pi_\theta^2}(s_1)\right) \\
\triangleq \quad & r^1(s_0, a^1_0, a^2_0)+\mathbb{E}_{\tau \sim p(\tau)}\Bigg[\sum_{t = 1}^\infty \gamma^{t}\Big( r_t^1(s_t, a^1_t, a^2_t) \mathcal{H}(\pi_\theta^1(a^1_t|s_t, a_t^2) \rho_\psi^1(a^2_t|s_t))\Big)\Bigg] \\
= \quad & -r^2(s_0, a^1_0, a^2_0)-\mathbb{E}_{\tau \sim p(\tau)} \Bigg[ \sum_{t = 1}^\infty \gamma^{t} \Big( r_t^2(s_t, a^1_t, a^2_t) + \mathcal{H}(\pi_\theta^1(a^1_t|s_t, a_t^2) \rho_\psi^1(a^2_t|s_t))\Big)\Bigg] \\
= \quad & \bigg( \sum_{t=1}^\infty \gamma^{t}\mathcal{H}(\pi_\theta^2(a^2_t|s_t, a_t^1) \rho_\psi^2(a^1_t|s_t)) -
\gamma^{t} \mathcal{H}(\pi_\theta^1(a^1_t|s_t, a_t^2) \rho_\psi^1(a^2_t|s_t)) \bigg) \\&- Q^2_{\pi_\theta^1, \pi_\theta^2}(s_0, a^1_0, a^2_0),
\end{aligned}
\end{equation}
\noindent where $\mathcal{H}(\cdot)$ denotes Shannon entropy.
For lighter notation, let 
\begin{equation}
\begin{aligned}
\mathcal{E}= \bigg( \sum_{t=1}^\infty \gamma^{t}\mathcal{H}(\pi^2(a^2_t|s_t, a_t^1) \rho_\psi^2(a^1_t|s_t)) -\gamma^{t}\mathcal{H}(\pi^1(a^1_t|s_t, a_t^2) \rho_\psi^1(a^2_t|s_t)) \bigg).
\end{aligned}
\end{equation}

Then, we can determine the gradient as:
\begin{equation}
\begin{aligned}
\centering
\nabla_{\psi^i}J(\psi^i) \; &= \; \mathbb{E}_{\tau \sim p, \; \epsilon \sim \mathcal{N}}\big[\nabla_{\psi^i}Q^2_{\pi^1, \pi^2}(s_0, f^i_\psi(\epsilon; s_0)) - \nabla_{\psi^i}Q^2_{\pi^1, \pi^2}(s_0, f^i_\psi(\epsilon; s_0))\\ 
& \quad \; + \nabla_{\psi^i} \mathcal{E}-\alpha \nabla_{\psi^i}\text{log}(\rho_{\psi}^i (f^i_\psi(\epsilon; s_0)|s_0) \big)\big] \\
& = \; \mathbb{E}_{\tau \sim p, \; \epsilon \sim \mathcal{N}} \big[\nabla_{\psi^i} \mathcal{E} -\alpha \nabla_{\psi^i}\text{log}(\rho_{\psi}^i (f^i_\psi(\epsilon; s_0)|s_0) \big)\big] \\
& = \; \mathbb{E}_{\tau \sim p}\big[\nabla_{\psi^i} \mathcal{E}+\alpha \nabla_{\psi^i}\mathcal{H}(\rho_{\psi}^i (\cdot, \cdot|s_0) \big)\big].
\end{aligned}
\end{equation}
\end{proof} 

To alleviate the potential issue of neutralized gradients, we propose to modify the TDOM objective to be based on empirical data. Specifically, we modify the objective function as
\begin{equation}
\begin{aligned}
    \centering
    J(\psi^i) \; = \; & \mathbb{E}_{(s,\mathbf{a}^{-j}) \sim D, \; \hat{a}^{j} \sim \rho^i_{\psi}} \Big[ \sum_{j \in -i} Q_\phi^j\left(s, \hat{a}^j, \mathbf{a}^{-i \backslash \{j\}}, a^i\right)  -\alpha\log \rho^j_{\psi}(\hat{\mathbf{a}}^{-j}|s)\Big].
\end{aligned}
\label{opfinal}
\end{equation}
Note that again we use the reparameterization trick \cite{haarnoja2018soft} in order to be able to exchange expectation and gradient, while still sampling from the opponent model $\rho_{\psi}^i$. The pseudo-code can be found below \ref{algo:TDOM-AC}.

\begin{algorithm}[htbp]
   \caption{Multi-agent Actor-Critic with Time Dynamical Opponent Model (TDOM-AC)}
   \label{algo:TDOM-AC}
\begin{algorithmic}
   \STATE Initialize replay buffer $D$ and randomly initialize $N$ soft Q networks $Q^{1..n}_{\phi_{i..n}}$, $N$ policy networks $\pi^{1..n}_{\theta_{1..n}}$, and  opponent model $\rho^{1..n}_{\psi_{1..n}}$ with parameters $\phi_{i..n}$, $\theta_{1..n}$ and  $\psi_{1..n}$.
    \STATE Initialize the parameters of target networks with $Q^{1..n}_{\overline{\phi}_{1..n}}$
   \FOR{each iteration}
   \STATE Sample $s_0$ according to $p_0(\cdot)$
   \WHILE{Not done}
   \FOR{each agent}
  \STATE Sample $\hat{\textbf{a}}^{-i}_t$ from $\rho^i(\cdot|s_t)$ and $a^{i}_t$ from $\pi^i(\cdot|s_t,\hat{\textbf{a}}^{-i}_t)$ 
   \STATE Combine the true actions $\textbf{a}_t=[a^1_t,...,a^n_t]$ and take one step forward
   \ENDFOR
   \STATE Observe $s_{t+1}$, $\textbf{r}_t=[r^1_t, ..., r^n_t]$ and store $(s_t,\textbf{a}_t, \textbf{r}_t,s_{t+1})$ in $D$
   \STATE Sample minibatches of $N$ transitions from $D$
   \FOR{each agent}
   \STATE Estimate policy gradient according to Equations \ref{qfinal},\ref{pifinal}, and \ref{opfinal}:
   \begin{equation}
    \begin{aligned}
        \phi^i_{t+1} \; &\leftarrow \; \ \phi^i_{t} + \eta_\phi \hat{\nabla}_{\phi^i} J(\phi_t^i) \\
    \theta^i_{t+1} \; &\leftarrow \; \ \theta^i_{t} + \eta_\theta \hat{\nabla}_{\theta^i} J(\theta_t^i)\\
     \psi^i_{t+1} \; &\leftarrow \; \ \psi^i_{t} + \eta_\psi \hat{\nabla}_{\psi^i} J(\psi_t^i).
 \notag
    \end{aligned}
    \end{equation}
   \STATE Update the parameters of target networks $Q^{1..n}_{\overline{\phi}_{1..n}}$
   \ENDFOR
   \ENDWHILE
   \ENDFOR
\end{algorithmic}
\end{algorithm}

\section{Experiments}
\label{sec:exp}
We compare the proposed TDOM-AC to two state-of-the-art algorithms based on opponent modelling: PR2 \cite{wen2019probabilistic} and ROMMEO \cite{tian2019regularized}, which have shown a better performance with respect to the considered measures compared to \textit{Multi-Agent Soft Q-Learning} MASQL \cite{wei2018multiagent} and MADDPG \cite{lowe2017multi} in previous studies. We evaluate the performance of the proposed TDOM-AC methods on a differential game \cite{wei2018multiagent,wen2019probabilistic,tian2019regularized} and the multi-agent particle environments \cite{lowe2017multi}. Those tasks contain fully cooperative and mixed cooperative-competitive objectives with challenging non-trivial equilibria \cite{wen2019probabilistic} and continuous action space. All the experiments are adopted from PR2 and ROMMEO for adequate comparison.

To reduce the performance difference caused solely by entropy regularization, we add an entropy term to the PR2 objective and equip it with a stochastic policy since both TDOM-AC and ROMMEO employ the maximum entropy reinforcement learning framework. This has been shown to yield better exploration and sample efficiency \cite{haarnoja2018soft}.

For the experiment settings, all policies and opponent models use a fully connected multi-layer perceptron (MLP) with two hidden layers of 256 units each, outputting the mean $\mu$ and standard deviation $\sigma$ of a univariate Gaussian distribution. All hidden layers use the leaky-RelU activation function and we adopt the same invertible squashing function technique as \cite{haarnoja2018soft} for the output layer. For the Q-network, we use a fully-connected MLP with two hidden layers of 256 units with leaky-Relu activation function, outputting the Q-value. We employ the Adam optimizer with the learning rate $3e-4$ and batch size 256. The target smoothing coefficient $\tau$, entropy control parameter $\alpha$ and the discount factor $\gamma$ are $0.01,1$, and $0.95$ respectively. All training hyper-parameters are derived from the SAC algorithm (as published in \cite{haarnoja2018soft}) without any additional adaptations. 
\subsection{Differential Game}
The differential Max-of-Two Quadratic Game is a single step continuous action space decision making task, where the gradient update tends to direct the training agent to a sub-optimal point \cite{tian2019regularized}. The reward surface is displayed in the Fig \ref{DIFF-1}. There exists a local maximum $0$ at $(-5, -5)$ and a global maximum $10$ at $(5, 5)$, with a deep valley positioned in the middle. The agents are rewarded by their joint actions, following the rule: $r_1=r_2=max(f_1,f_2)$, where $f_1=0.8*[-(\frac{a_1+5}{3})^2-(\frac{a_2+5}{3})^2]$ and $f_1=[-(\frac{a_1-5}{1})^2-(\frac{a_2-5}{1})^2]+10$. Both of the agents have the same continuous action space in the range $[-10, 10]$.
\begin{figure}[h]
\centering
\includegraphics[width=0.9\columnwidth]{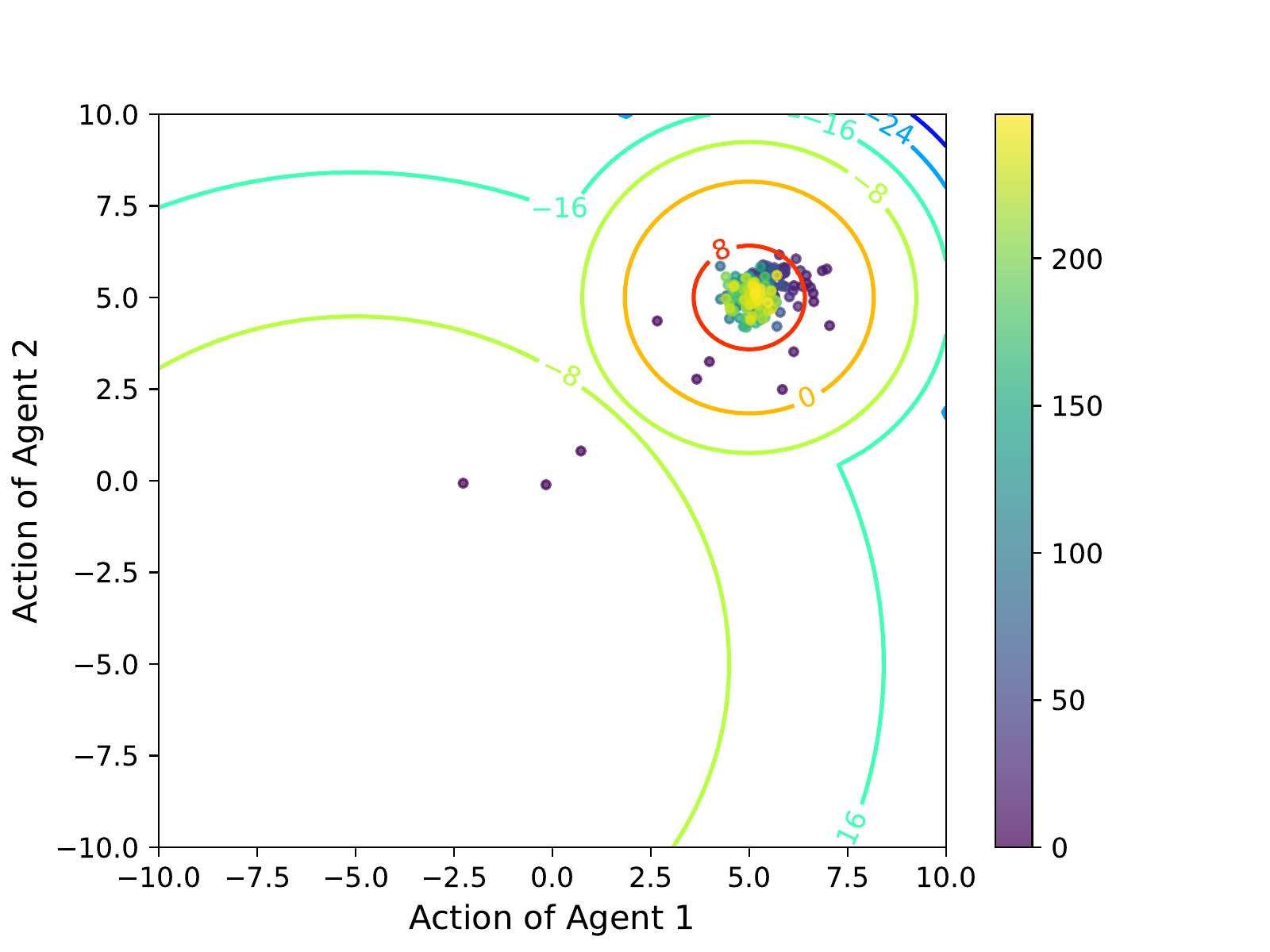}
\caption{Reward surface and learning path of agents trained by TDOM-AC. Scattered points are actions taken at each step, the lighter points are sampled later during training.}
\label{DIFF-1}
\end{figure}
Compared to other state-of-the art approaches, TDOM-AC shows a superior performance. In Fig. \ref{DIFF-2}, the learning path of the proposed TDOM-AC is displayed, where the lighter (yellow) dots are sampled later. This indicates a stable and fast convergence. In Fig. \ref{DIFF-2}, the learning curves of all considered algorithms are displayed. Both TDOM-AC and ROMMEO show a fast and stable convergence. However, ROMMEO fails for some random seeds, resulting in a lower average performance. We note that the maximum-entropy version of PR2 indeed converges faster than the original version \cite{wen2019probabilistic}. Nevertheless, the learning process fluctuates significantly and it suffers from substantial computational cost, see Table \ref{tab:time}.
\begin{figure}[h]
\centering
\includegraphics[width=0.9\columnwidth]{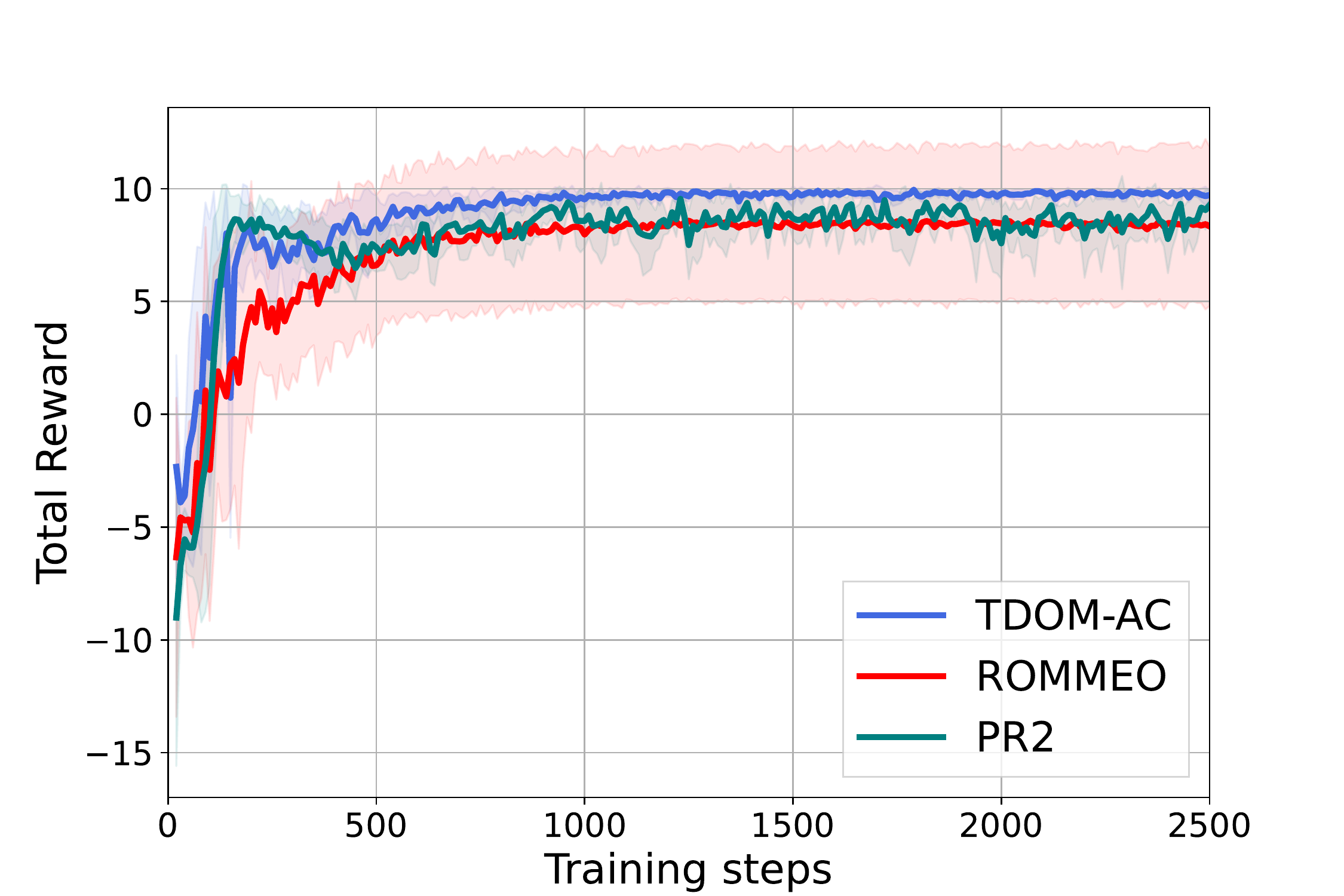}
\caption{Average performance of TDOM-AC and other baselines, where the shaded areas show the 1-SD confidence intervals over multiple random seeds}
\label{DIFF-2}
\end{figure}
\begin{table}
\centering
\begin{tabular}{llll}
\hline
Methods  &TDOM-AC&ROMMEO&PR2 \\
\hline
Running time&$\textbf{0.068s}$&$0.089s$&$0.436s$\\
\hline
\end{tabular}
\caption{Average running time (seconds) per update of different methods}
\label{tab:time}
\end{table}

\subsection{Cooperative Navigation}
\begin{figure}[htbp]
\centering
\includegraphics[width=0.9\columnwidth]{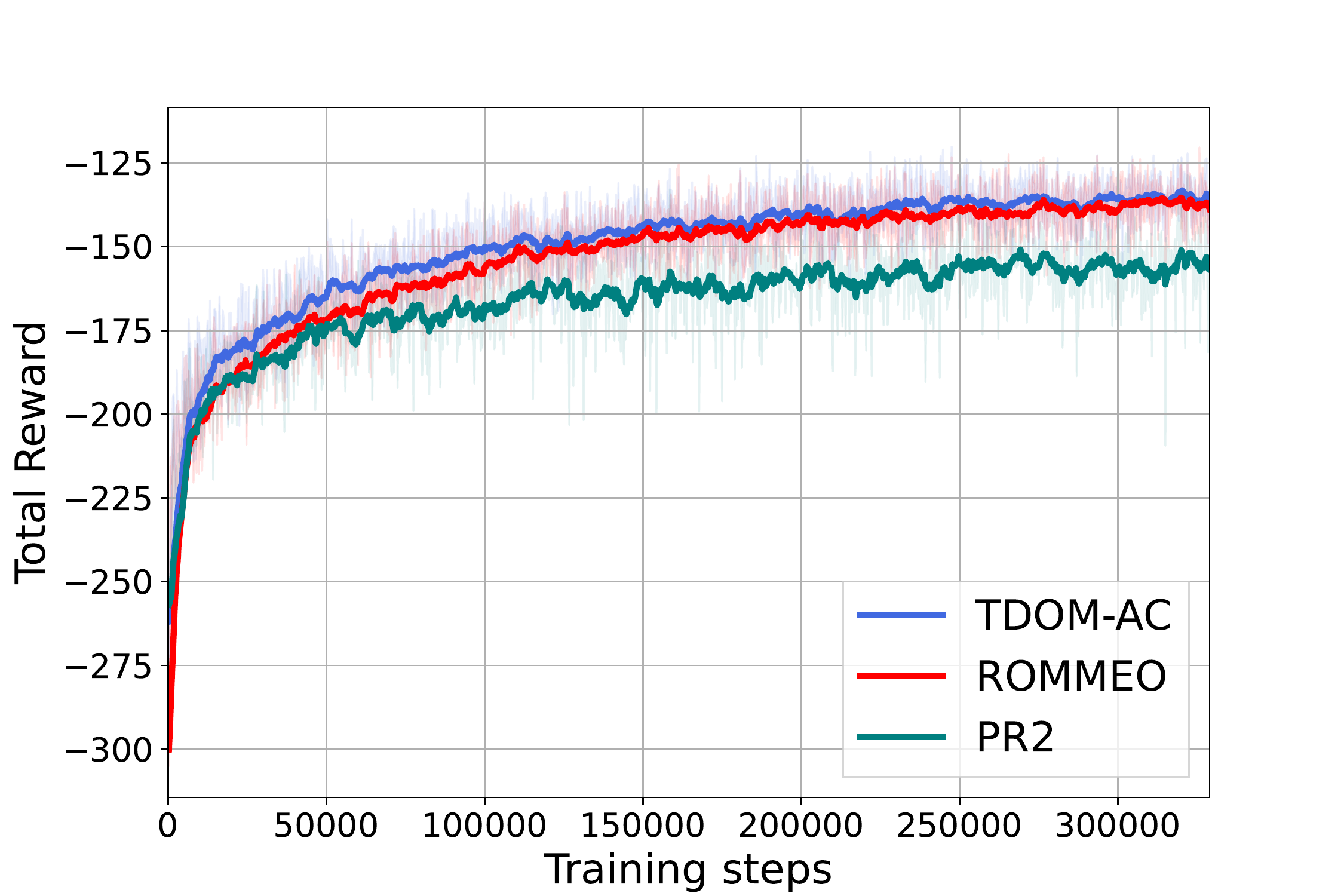}
\caption{Moving average of total reward of TDOM-AC and other baselines on Cooperative Navigation.}
\label{Navi-1}
\end{figure}

Cooperative Navigation is a three-agent fully cooperative task. The three agents should learn to cooperate to reach and cover three randomly generated landmarks. The agents can observe the relative positions of other agents and landmarks and are collectively rewarded based on the proximity of any agent to each landmark. Besides this, the agents are being penalized when colliding with each other. The expected behavior is to "cover" the three landmarks as fast as possible without any collision. The result shows that TDOM-AC outperforms all other considered baseline algorithms in terms of both faster convergence and a better performance, see Fig \ref{Navi-1}. Also, the TDOM-AC attains more accurate opponent behavior prediction, despite the fact that the agents do not have direct access to any opponent action distribution, see Fig \ref{Navi-2}. This is in contrast to ROMMEO, which utilizes a regularized opponent model, the regularization being the KL divergence between the opponent model and the empirical opponent distribution.

\begin{figure}[htbp]
\centering
\includegraphics[width=0.9\columnwidth]{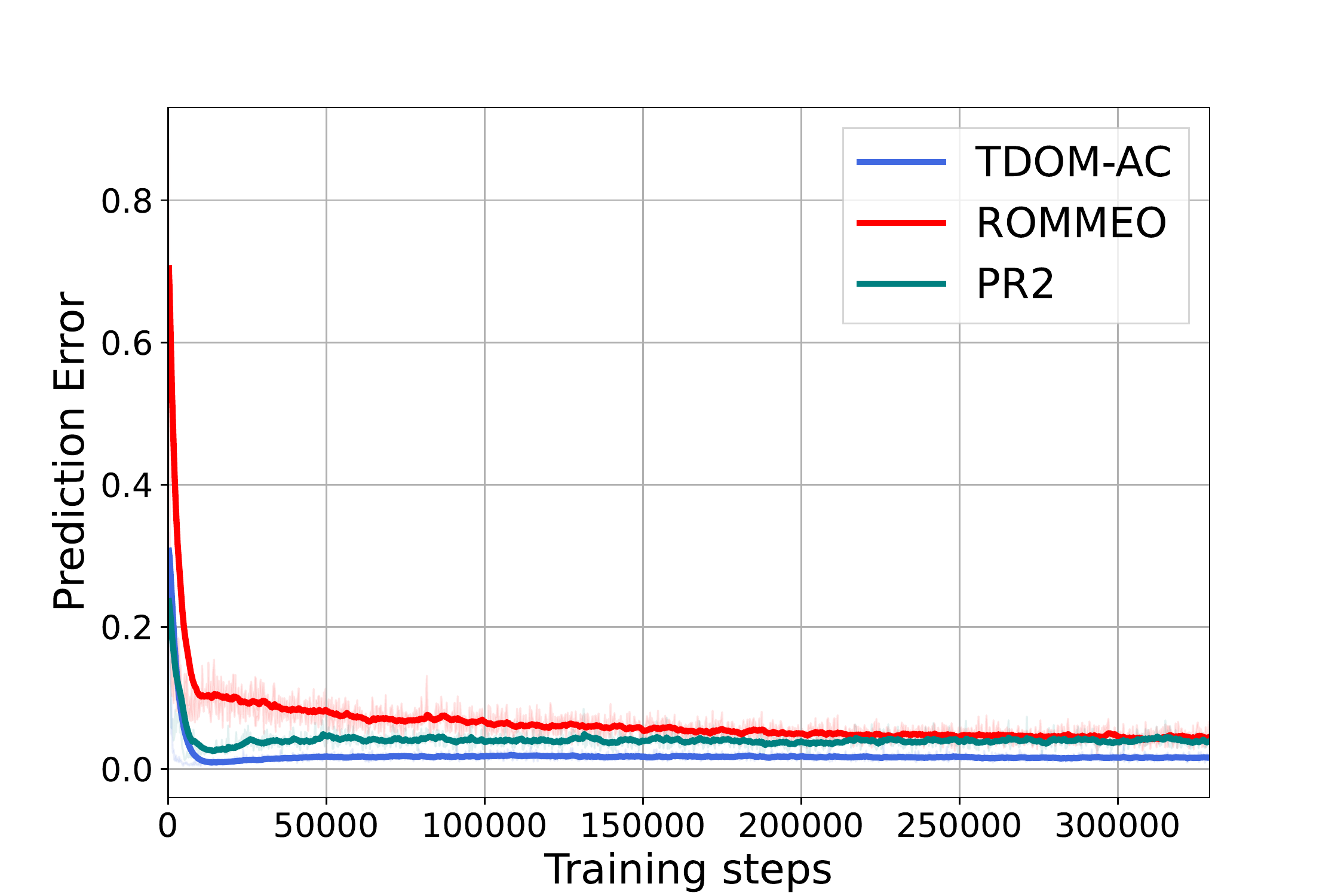}
\caption{The test time opponents' behaviors prediction error of TDOM-AC and other baselines}
\label{Navi-2}
\end{figure}
\ 
\subsection{Predator and Prey}
Predator and Prey is a challenging four-agent mixed cooperative-competitive task. There are three slower cooperating adversaries that try to chase the faster agent in a randomly generated environment with two large landmarks impeding the way. The cooperative adversaries are rewarded for every collision with the agent, while the agent is being penalized for any such collision. All agents can observe the relative positions and velocities of other agents and the positions of the landmarks.

For this task, we train all the algorithms for 0.6M steps and compare the normalized average episode advantage score (the sum of agent's rewards in an episode - the sum of adversaries' rewards in an episode \cite{wen2019probabilistic,lowe2017multi}. We evaluate the performance of the different algorithms by letting the cooperative adversaries trained by one algorithm play against an agent trained by another algorithm and vice versa. A higher score means the agent (prey) performs better than the cooperative adversaries (predators), while a lower score means that the cooperative adversaries have a superior policy over the agent. Table \ref{tab:plain} shows that the TDOM-AC performs best on both prey $(0.999)$ and predator $(0.547)$ side.
\begin{table}[h]
\centering
\begin{tabular}{llll|l}
\hline
\textcolor{red}{Ag} vs. \textcolor{blue}{Ads}                &  \textcolor{blue}{TDOM-AC} &  \textcolor{blue}{ROMMEO} &  \textcolor{blue}{PR2}&Mean \\ \hline
\textcolor{red}{TDOM-AC}   &0.967&1.000&0.999& \textbf{0.989} \\
\textcolor{red}{ROMMEO}   &0.674&0.997&0.981&0.884 \\
\textcolor{red}{PR2}   &0.000&0.722&0.313&0.345\\\hline
Mean  &\textbf{0.547}&0.906&0.764&N/A\\
\hline
\end{tabular}
\caption{Comparison of different model settings (Agent vs. Adversaries). The values are the normalized average episode advantage scores.}
\label{tab:plain}
\end{table}

\section{Conclusion}
\label{sec:conclusion}
In this work, we propose a novel time dynamical opponent model called TDOM. It supports mixed cooperative-competitive tasks with a low computational cost. Furthermore, we introduce the  TDOM-AC algorithm and demonstrate its' superior performance compared to other state-of-the-art methods on multiple challenging benchmarks. In the future, we plan to omit the centralized training and instead also model opponent Q-function parameters as time dynamical latent variables, thereby relying exclusively on past opponent actions for training. Also, we would like to evaluate the proposed approach on partially observable environments where the agent does not share its observation space with all opponents.
\bibliography{main}
\end{document}